
\documentclass[final, 5p, times, twocolumn]{elsarticle}

\biboptions{semicolon, sort&compress}


\usepackage{amsfonts}
\usepackage{subfigure}

\usepackage{amsmath}
\usepackage{amssymb,amsthm}

\usepackage{url}

\usepackage{threeparttable}
\usepackage{array}
\usepackage{multirow}
\usepackage{diagbox}
\newcommand{\PreserveBackslash}[1]{\let\temp=\\#1\let\\=\temp}
\newcolumntype{C}[1]{>{\PreserveBackslash\centering}p{#1}}
\newcolumntype{R}[1]{>{\PreserveBackslash\raggedleft}p{#1}}
\newcolumntype{L}[1]{>{\PreserveBackslash\raggedright}p{#1}}

\usepackage{pgfplots}
\pgfplotsset{compat=1.13}

\newtheorem{theorem}{Theorem}
\newtheorem{lemma}{Lemma}
\theoremstyle{definition}

\newcommand{\trans}{\mathcal{T}}
\newcommand{\rot}{\mathcal{R}}
\newcommand{\perm}{\mathcal{P}}
\newcommand{\diagperm}{\mathcal{D}}

\bibliographystyle{elsarticle-num}

\begin{document}

\begin{frontmatter}
\title{Deep Rotation Equivariant Network}

\author[cad]{Junying~Li}
\ead{microljy@zju.edu.cn}

\author[ccnt]{Zichen~Yang}
\ead{zichenyang.math@gmail.com}

\author[ccnt]{Haifeng~Liu}
\ead{haifengliu@zju.edu.cn}

\author[cad]{Deng~Cai\corref{cor}}
\ead{dengcai@gmail.com}

\cortext[cor]{Corresponding author}

\address[cad]{The State Key Laboratory of CAD\&CG, College of Computer Science, Zhejiang University, China}
\address[ccnt]{College of Computer Science, Zhejiang University, China}

\begin{abstract}
Recently, learning equivariant representations has attracted considerable research attention. Dieleman et al. introduce four operations which can be inserted into convolutional neural network to learn deep representations equivariant to rotation. However, feature maps should be copied and rotated four times in each layer in their approach, which causes much running time and memory overhead. In order to address this problem, we propose Deep Rotation Equivariant Network consisting of cycle layers, isotonic layers and decycle layers.
Our proposed layers apply rotation transformation on filters rather than feature maps, achieving a speed up of more than 2 times with even less memory overhead. We evaluate DRENs on Rotated MNIST and CIFAR-10 datasets and demonstrate that it can improve the performance of state-of-the-art architectures.
\end{abstract}

\begin{keyword}
Neural network \sep Rotation equivariance \sep Deep learning
\end{keyword}

\end{frontmatter}

\section{Introduction}
\label{sec::intro}
Convolutional neural networks(CNNs) recently have made great success in computer vision tasks\cite{cnn1,cnn2,cnn3,cnn4,he2016deep}. One of the reasons to its success is that weight sharing of convolution layers ensures the learnt representations are translation equivariant\cite{group}, i.e., shifting an image and then feeding it through the network is the same as feeding the original image and then shifting the resulting representations.

However, CNNs fail to exploit rotation equivariance to tackle vision problems on datasets with rotation symmetry in nature, especially microscopic images or aerial images, which can be photographed from any angle. Thus, current studies focus on dealing with this issue.

One widely used method to achieve rotation equivariance is to constrain the filters of the first convolutional layer to be rotated copies of each other, and then apply cross-channel pooling immediately after the first layer\cite{wu2015flip,marcos2016learning,teney2016learning}. However, only shallow representations equivariant to rotation can be learnt by applying one convolutional layer. In addition, such representations are nearly trivial, since pooling rotated copies is approximately equivalent to convolving non-rotated inputs with highly symmetric filters. 

To solve this problem, \cite{dieleman2016exploiting} introduces four operations which can be combined to make these models able to learn deep representations equivariant to rotation. However, every feature map should be copied and rotated by these operations four times, which causes high memory and running time overhead. 

In this paper, we give a comprehensive theoretical study on approaches to rotation equivariance with CNNs. We propose a novel CNN framework, Deep Rotation Equivariant Network(DREN) to obtain deep equivariance representations. We prove that DREN can achieve the identical output to that of \cite{dieleman2016exploiting} with much less running time and memory requirements. 

We evaluate our framework on two datasets, Rotated MNIST and CIFAR-10. On Rotated MNIST, it can outperform the existing methods with less number of parameters. On CIFAR-10, it can improve the results of state-of-the-art models with the same number of parameters. Moreover, our implementation achieves a speed up of more than 2 times as that of \cite{dieleman2016exploiting}, with even less memory overhead. 

\section{Related Works}
Learning invariant representations by neural networks has been studied for over a decade. Early works focus on refinement of restricted Boltzmann machines(RBMs) and deep belief nets(DBNs). \cite{kavukcuoglu2009learning} gives an approach to automatically generate topographic maps of similar filters in an unsupervised manner and these filters can produce local invariance when being pooled together. \cite{norouzi2009stacks} develops convolutional RBM(c-RBM), using weight sharing to achieve shift-invariance. Later, a following work by \cite{schmidt2012learning} incorporates linear transformation invariance into c-RBMs, yielding features that have a notion of transformation performed. The model proposed by \cite{lee2009convolutional} uses a probabilistic max-pooling layer to support efficient probabilistic inference, which also shows the property of translation invariance.

Recently, convolutional neural networks have become the most popular models in various computer vision tasks\cite{yu2017multi,yu2017iprivacy,zhang2017learning,grzegorczyk2016encouraging}. One of the advantages of CNNs is its translation equivariant property provided by weight sharing\cite{group}. However, it cannot deal with rotation transformation of input images. Thus, many variants of CNNs have been proposed to settle these problems. Basically, the idea of most of the related works\cite{fasel2006rotation,teney2016learning,dieleman2015rotation} is to stack rotated copies of images or features to obtain rotation equivariance. 

There are also other methods. \cite{gens2014deep} propose deep symmetry networks that can form feature maps over arbitrary transformation groups approximately. Methods proposed by \cite{wu2015flip, marcos2016learning} show that rotation convolution layers followed by a cross-channel pooling over rotations could achieve rotation equivariance. In fact, none of the directional features could be extracted by these methods, since pooling is applied right after one rotation convolution. \cite{group} propose a group action equivariant framework by stacking group acted convolution and provide a theoretically grounded formalism to exploit symmetries of CNNs. \cite{worrall2016harmonic} present harmonic networks, a CNN structure exhibits equivariance to patch-wise translation and 360-rotation. Recently, the vector field network proposed by \cite{marcos2016rotation} applies interpolation to deal with rotation of general degrees. 

\cite{dieleman2016exploiting} introduces four operations to encode rotation symmetry into feature maps to build a rotation equivariant neural network. However, feature maps should be rotated each time to ensure equivariance, which obviously costs much time and memory. Our approach presents a different way to overcome this issue by rotating filters, which brings about exactly the same results, but in a more efficient way. 

\section{Equivariance and invariance}
In this section, we briefly discuss the notions of equivariance and invariance of image representations. Formally, a representation of a CNN can be regarded as a function $f$ mapping from image spaces to feature spaces.

We say, a representation $f$ is equivariant to a family $\mathfrak{T}$ of transformations on image spaces, if for any transformation $\trans\in\mathfrak{T}$, there exists a corresponding transformation $\trans '$ on feature spaces, such that 
\begin{equation}
f(\trans x)=\trans 'f(x),
\end{equation}
for any input images $x$. Intuitively, this means that the learnt representation $f$ of CNNs changes in an expected way, when the input image is transformed. 

There is another stronger case when $\trans '$ is the identity map, i.e., the map fixing the inputs, for all $\trans\in\mathfrak{T}$. This indicates that the representations remain unchanged no matter how the input data is transformed by transformations in $\mathfrak{T}$, i.e., the representation is invariant. Invariance is an ideal property of representation, because a good object classifier must output an invariant class label no matter what location of the object lies in.

\begin{figure}
\label{fig::rotequiv}
\begin{center}
\includegraphics[width=0.47\textwidth]{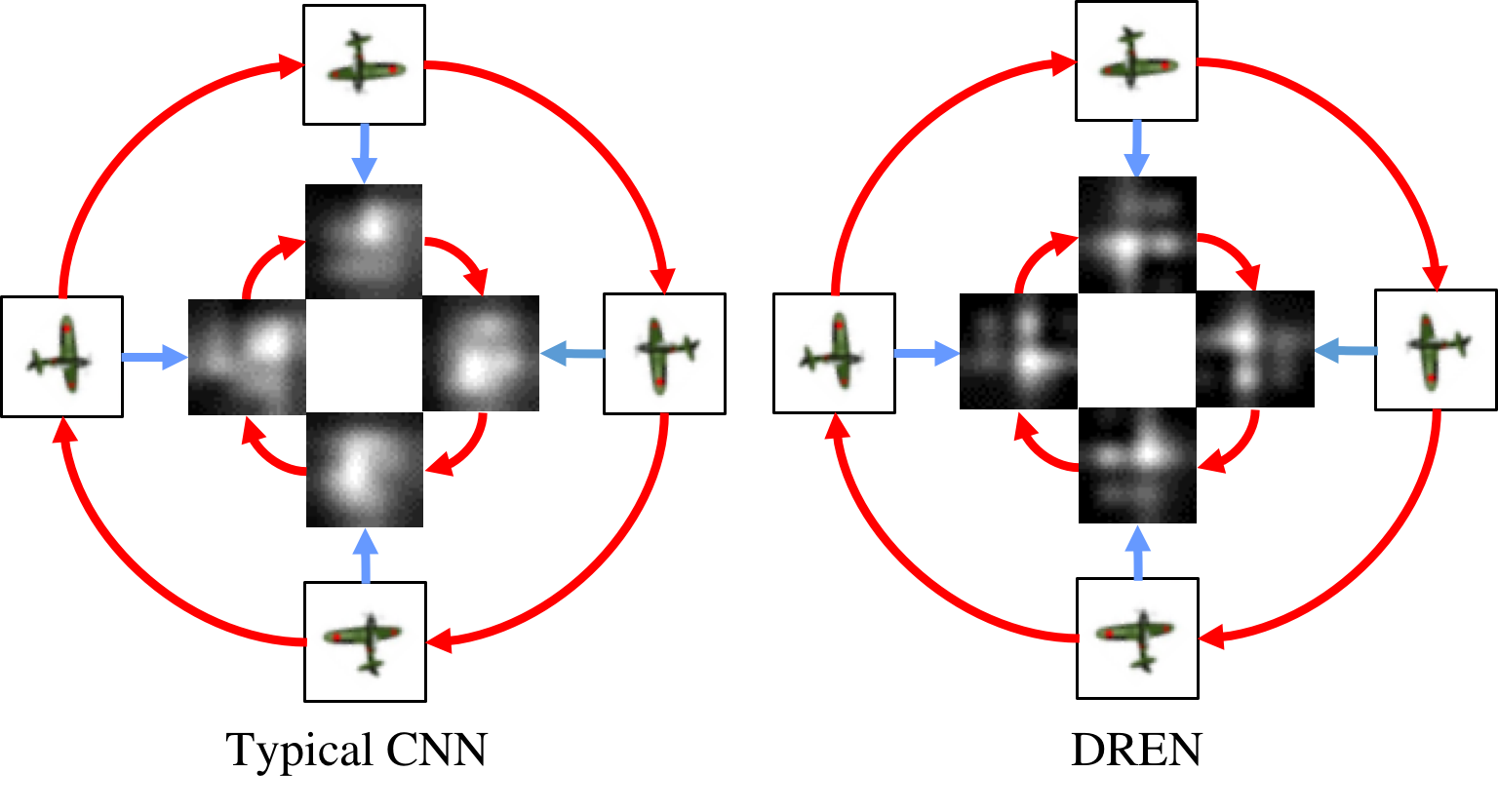}
\end{center}
\caption{\textbf{Latent representations learnt by a CNN and a DREN(Proposed)}, where $R$ stands for clockwise rotation. The left part is the result of a typical CNN while the right one is that of a DREN. In both parts, the outer cycles consist of the rotated images while the inner cycles consist of the learnt representations. Features produced by a DREN is equivariant to rotation while that produced by a typical CNN is not.}
\end{figure}

The goal of this paper is to present a novel convolutional neural network framework, which learns representations that are equivariant to rotation transformations $\rot$, s.~t. $\rot '=\rot$, that is
\begin{equation}
\label{eqn::goal}
f(\rot x)=\rot f(x),
\end{equation}
for any input image $x$. 
Figure 1 gives an example of rotation equivalent representations learnt by DREN, comparing to a traditional CNN.
The reason that we do not directly work on rotation invariant representations is that this kind of rotation equivariance can be easily lifted to rotation invariance, for instance, by a global pooling operation\cite{nin}, i.e., the kernel size of this pooling layer is equal to the size of feature maps.

Since these are the only four kinds of possible rotation of an image that can be performed without interpolations, we mainly deal with the rotation transformation family $\mathfrak{R}=\{\rot_\theta|\theta=k\pi/2, k\in\mathbb{Z}\}$. However, our experimental results show that our framework can achieve good performance when dealing with rotation for general degrees. 

\begin{figure*}
\begin{center}
\includegraphics[width=0.9\textwidth]{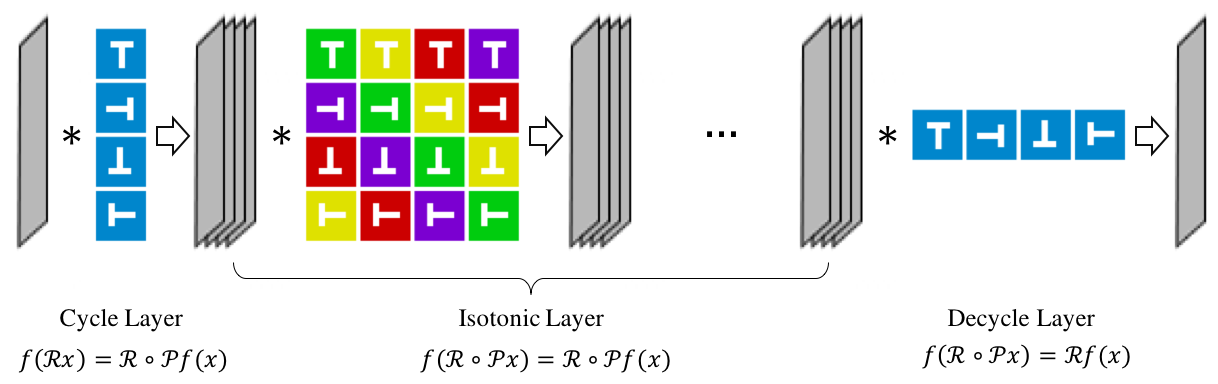}
\end{center}
\caption{\textbf{The framework of Deep Rotation Equivariant Network.} The gray panels represent input, feature maps and output. Each square represents a weight kernel. The letter 'T' is utilized for distinguishing orientation clearly. The different colors of kernel background indicate that the kernel are not qualitatively equivalent. Although this figure seems similar to that one in \cite{dieleman2016exploiting}, there are 3 critical differences: 1. We apply rotation on the filters rather than the feature maps. 2. The matrix in an isotonic layer is different in order from the matrix of cyclic rolling operation in \cite{dieleman2016exploiting}. 3. The decycle layer is a special convolution layer, different from the cyclic pooling applied in \cite{dieleman2016exploiting}.}
\label{fig:framework}
\end{figure*}

\section{Rotation equivariant convolution}
In this section, we define three novel types of convolutional layers, which are combined to learn rotation equivariant features. 

\subsection{Preliminaries}
\label{sec::Preliminaries}
For the sake of simplicity, we omit bias terms, activation functions and other structures, concentrating on convolution. In addition, we set the stride of any convolution layer be $1$. The general case will be discussed in Section \ref{sec::framework}.

First, we vectorize convolution operation formally to simplify our derivation. Shortly, we shall use matrix multiplication to describe multi-channel convolution. Let us assume that the input of a convolutional layer contains $n$ feature maps(images) $\{x_j\}_{j=1}^{n}$. This layer has $mn$ filters, denoted by $W_{ij}$ with $1\leq i\leq m, 1\leq j\leq n$. These can be organized as a matrix(vector) $x$ of size $n$ and a matrix $W$ of size $m,n$ in which entries are filters or feature maps. We refer to such a matrix(vector) a hyper-matrix(hyper-vector). Then, the convolution $W*x$ is defined to be a hyper-vector of size $m$ whose $i$-th entry is,
\begin{equation}
\sum_{1\leq j\leq n}W_{ij}*x_j,
\end{equation}
for each $1\leq i\leq m$. One can verify that this is actually equivalent to ordinary multi-channel convolution. 

Next, we introduce the rotation operator $\rot$, rotating a filter or a feature map by a degree of $\pi/2$ counterclockwise. We also define the action of $\rot$ on a hyper-matrix $W$. $\rot(W)$ is defined to be entrywise rotation. There are three obvious facts about the rotation operator that are frequently used in the sequel. 
\begin{enumerate}
\item Convolutionally distributive law: $\rot(W*x)=\rot(W)*\rot(x)$. This indicates that rotating filters and feature maps simutaneously before convolution yields rotated outputs.  
\item Additively distributive law: $\rot(X_1+X_2)=\rot(X_1)+\rot(X_2)$. Note that $+$ is entrywise addition of matrices. 
\item Cyclic law: $\rot^4$ is equal to the identity transformation.
\end{enumerate}

In our framework, we propose three new types of convolution layers to implement rotation equivariance. Given an image, a cycle layer transforms this image into $n_0$ groups, each containing $4$ feature maps. Then, one can apply $k$ isotonic layers consecutively to learn deeper representations. Suppose that the output of the $i$-th isotonic layer consists of $n_i$ groups with $1\leq i\leq k$, each containing $4$ feature maps as well. Finally, a decycle layer applies a convolution to merge feature maps, resulting in $n_{k+1}$ rotation equivariant feature maps.

Informally, the utility of the cycle layers is to transform rotation of images into cyclic permutations of feature maps in each group. Then, the isotonic layers are used to preserve the order of permutations in each group. Finally, the decycle layer decodes permutations in each group to produce rotation equivariant representations.

In the following discussion, assume that the number of groups $n_i=1$ for any $0\leq i\leq k+1$ . The general case will be discuss in Section \ref{sec::framework}

\subsection{Cycle layers}
\label{sec::flayer}
The basic idea of cycle layers is to stack rotated copies of the filters. By assumption, the hyper-matrix of the filters $W_{in}$ in a cycle layer contains $4$ filters, say $W_{in}=[w, \rot w, \rot^2w, \rot^3w]^{T}$. Given an input image $x$, by the definition of convolution, the output is
\begin{eqnarray*}
f_{in}(x)&=&W_{in}*x\\
         &=&[w*x,\rot w*x,\rot^2w*x,\rot^3w*x]^{T}.
\end{eqnarray*}
Next, we shall derive the relation between cycle layers and rotation operators. Before it, we have to define a cyclic permutation operator $\perm$ acting on a hyper-vector, adding the index of each entry by $1$. Note that if the added index is $4$, we reset it to $1$. For instance, $\perm(x_2)=x_{3}, \perm(x_4)=x_{1}$. This description is similar to the modular operation, thus we refer this operation addition modulo $4$, denoted by $x+1\ (\text{mod }4)$.

When a rotated image $\rot x$ is fed into a cycle layer, the output becomes
\begin{equation}
\begin{split}
\label{eqn::layer1}
f_{in}(\rot x)&=W_{in}*\rot x\\
&=[w*\rot x,\rot w*\rot x, \rot^2w*\rot x, \rot^3w*\rot x]^{T}\\
&=\rot[\rot^3w*x,w*x,\rot w*x,\rot^2w*x]^{T}\\
&=\rot\circ \perm(W_{in}*x)=\rot\circ \perm f_{in}(x),
\end{split}
\end{equation}
where the third equality follows from the cyclic law and the convolutional distributive law. This indicates that given a rotated image, the output $f_{in}(\rot x)$ of a cycle layer is the same as $f_{in}(x)$ up to a permutation and a rotation. Hence, the rotation of images is transformed into the order of cyclic permutation.

Although, cycle layers cannot produce rotation equivariance, one can immediately obtain it by a following cross-channel pooling operation. In this case, this operation produces a feature map by taking average or pixelwise maximum of the four rotated feature maps. But this is equivalent to use a highly symmetric filter, which generates nearly trivial features. Hence, we propose isotonic layers to replace it.

\subsection{Isotonic layers}
\label{sec::isotonic}
To produce non-trivial feature maps, a straightforward way is stacking more convolutional layers. However, one sees that an ordinary convolution operation will destroy the order of cyclic permutations produced by the cyclic layer. Consequently, we have to study that under what conditions, convolution can preserve the order of cyclic permutations. Mathematically, we can formulate this idea by the following equation,
\begin{equation}
\label{eqn::layer2}
f_{hide}(\rot\circ\perm x)=\rot\circ\perm f_{hide}(x),
\end{equation}
where $f_{hide}$ stands for the convolution performed by an isotonic layer. Intuitively, this means that feeding permuted rotated inputs, an isotonic layers yields the equivariant permuted rotated feature maps, hence preserving the order of cyclic permutation. By induction, one sees that a stacking of $h$ isotonic layers share the same property, where $h$ is any non-negative integer.

To derive equivalent conditions to Equation (\ref{eqn::layer2}), we define diagonally permutation operator $\diagperm$ acting on a two-dimensional hyper-matrix(not a hyper-vector), adding two indices of each entry by $1$ modulo $4$, for example, $\diagperm(W_{ij})=W_{i+1\ (\text{mod }4), j+1\ (\text{mod }4)}$. With this, we can show that how the cyclic permutation operator $\perm$ interacts with convolution.
\begin{lemma}
\label{thm::pwx}
Given an input hyper-vector $x$ of size $4$ and a square hyper-matrix $W$ of size $4$, we have,
$\perm(W*x)=\diagperm W*\perm x.$
\end{lemma}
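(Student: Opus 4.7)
The plan is to treat this as a purely combinatorial identity: both $\perm$ and $\diagperm$ act by shifting indices, so the claim should fall out by expanding each side entrywise and performing a change of summation index. No property of the convolution $*$ beyond its bilinear definition on hyper-vectors is actually needed; in particular, the convolutional distributive law and the cyclic law of $\rot$ from Section \ref{sec::Preliminaries} play no role here, which is what makes the identity clean.

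First I would fix the coordinate conventions for $\perm$ and $\diagperm$. The example $\perm(x_2)=x_3$, together with the explicit expansion in Equation (\ref{eqn::layer1}) where $[w*x,\ \rot w*x,\ \rot^2 w*x,\ \rot^3 w*x]^{T}$ gets rewritten as $[\rot^3 w*x,\ w*x,\ \rot w*x,\ \rot^2 w*x]^{T}$, forces the reading $(\perm y)_i = y_{i-1\ (\text{mod }4)}$ on any hyper-vector $y$, and correspondingly $(\diagperm W)_{ij} = W_{i-1,\, j-1\ (\text{mod }4)}$ for a square hyper-matrix $W$.

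Next I would compute both sides at a generic index $i$. For the left-hand side, the definitions give $(\perm(W*x))_i = (W*x)_{i-1} = \sum_{j=1}^{4} W_{i-1,\, j} * x_j$. For the right-hand side, $(\diagperm W * \perm x)_i = \sum_{j=1}^{4} (\diagperm W)_{ij} * (\perm x)_j = \sum_{j=1}^{4} W_{i-1,\, j-1} * x_{j-1}$. Reindexing by $k = j-1\ (\text{mod }4)$, which runs over the same residues as $j$, turns the right-hand side into $\sum_{k=1}^{4} W_{i-1,\, k} * x_k$, matching the left-hand side. Since this holds for every $i$, the two hyper-vectors are equal.

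The main (and essentially only) obstacle is pinning down the direction of the shift: the prose description of $\perm$ as ``adding the index by $1$'' is ambiguous between the left and right cyclic shift, and the two are inverses of each other. Once the convention is calibrated against the worked computation in Equation (\ref{eqn::layer1}), the lemma collapses to a one-line change of variables. If one preferred a more conceptual phrasing, the same observation says that the column shift built into $\diagperm$ is exactly undone by the $\perm$ applied to $x$, while the row shift built into $\diagperm$ reproduces the $\perm$ applied to the output $W*x$.
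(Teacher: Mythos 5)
Your proof is correct and is essentially the paper's own argument: both expand the two sides entrywise using the definition of hyper-matrix convolution and conclude by a cyclic change of the summation index. The only cosmetic difference is that you calibrate $\perm$ as $(\perm y)_i = y_{i-1\ (\mathrm{mod}\ 4)}$ from Equation (\ref{eqn::layer1}) while the paper's proof writes the shift as $+1$; the identity holds under either consistent convention, so this does not affect validity.
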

\begin{proof}
For any $1\leq j\leq 4$, let $y_j$ be the $j$-th column hyper-vector of $W*x$ and $y'_{j}$ be the $j$-th column hyper-vector of $\diagperm W*\perm x$. By the definition of convolution, one has,
\begin{align*}
\perm(y_j)&=y_{j+1(\text{mod }4)}\\
      &=\sum_{i=1}^{4}W_{i,j+1(\text{mod }4)}*x_{i}\\
      &=\sum_{i=1}^{4}W_{i+1(\text{mod }4),j+1(\text{mod }4)}*x_{i+1(\text{mod }4)}\\
      &=\sum_{i=1}^{4}\diagperm W_{ij}*\perm x_i\\
      &=y'_{j}.
\end{align*}
Thus, $\perm(W*x)=\diagperm W*\perm x$;
\end{proof}

By Lemma \ref{thm::pwx}, suppose $W_{hide}$ as the weight of isotonic layer, one can show that,
\begin{align*}
        \quad f_{hide}(\rot\circ\perm x) &= \rot\circ\perm f_{hide}(x)\\
\Leftrightarrow \quad W_{hide}*\rot\circ\perm x  &= \rot\circ\perm (W_{hide}*x)\\
\Leftrightarrow \quad W_{hide}*\rot\circ\perm x  &= \rot(\diagperm W_{hide}*\perm x)\\
\Leftrightarrow \quad W_{hide}*\rot\circ\perm x  &= \diagperm\circ\rot W_{hide}*\rot\circ\perm x.
\end{align*}
Since $x$ can be an arbitrary image, one has,
\begin{equation}
\label{eqn::condition}
W_{hide}=\diagperm\circ\rot W_{hide}.
\end{equation}
By straightforward calculation, we find a special class of hyper-matrices satisfying Equation (\ref{eqn::condition}), which yields the following theorem.
\begin{theorem}
A hyper-matrix $W_{hide}$ satisfies Equation (\ref{eqn::condition}), if and only if it is of the form
$$\left(\begin{array}{cccc}
A    &    B &    C & D\\
\rot D   &   \rot A &   \rot B & \rot C\\
\rot ^2C & \rot ^2D & \rot ^2A & \rot ^2B\\
\rot ^3B & \rot ^3C & \rot ^3D & \rot ^3A
\end{array}\right),$$
where $A, B, C, D$ are square filters of the same size.
\end{theorem}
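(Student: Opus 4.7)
The plan is to translate the operator-level fixed-point equation $W_{hide}=\diagperm\circ\rot\, W_{hide}$ into pointwise relations among the sixteen block entries of $W_{hide}$, and then exploit the orbit structure of the diagonal shift to read off four independent free filters.

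Unpacking the definitions of $\diagperm$ and $\rot$ turns the condition into a single entrywise recurrence of the form $W_{i+1,\,j+1}=\rot\, W_{ij}$, with both indices read modulo $4$, which must hold for every pair $(i,j)\in\{1,2,3,4\}^2$. I would next study the action of the diagonal shift $(i,j)\mapsto(i+1,j+1)$ on this index set. Since the shift has order $4$ and there are $16$ positions, the index set partitions into exactly four orbits of length $4$, and two pairs lie in the same orbit iff they agree on $j-i\pmod 4$. A natural set of orbit representatives is the top row $(1,1),(1,2),(1,3),(1,4)$, so I would declare $A:=W_{1,1},\ B:=W_{1,2},\ C:=W_{1,3},\ D:=W_{1,4}$ as the only free parameters.

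Iterating the recurrence along each orbit then expresses every entry as a power of $\rot$ applied to the corresponding representative. For instance, marching from $(1,2)$ through $(2,3)\to(3,4)\to(4,1)$ produces $B,\ \rot B,\ \rot^{2}B,\ \rot^{3}B$ at those positions, and the other three orbits yield analogous diagonals starting from $A$, $C$, and $D$. Writing out all four orbits and collecting the entries row by row reconstructs precisely the matrix claimed. The loop-closure condition, namely that traversing an entire orbit brings one back to the starting entry, reduces to $\rot^{4}W_{1,k}=W_{1,k}$, which holds by the cyclic law; hence the recurrence is globally consistent and imposes no extra constraint on $A,B,C,D$. For the converse direction, it is routine to substitute the stated matrix into $W_{hide}=\diagperm\circ\rot\, W_{hide}$ and verify the sixteen entrywise identities, each of which is immediate once again from $\rot^{4}=\mathrm{id}$.

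The main obstacle is the careful bookkeeping of modular indices at the wrap-around transitions (e.g.\ $(3,4)\to(4,1)$ versus $(4,4)\to(1,1)$), because an off-by-one in these steps would misalign the powers of $\rot$ within a row and spoil the match with the stated form. Beyond this purely combinatorial care, the argument is essentially linear algebra over the cyclic group $\langle\rot\rangle$, with the four orbits providing the four degrees of freedom $A,B,C,D$.
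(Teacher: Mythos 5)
Your argument is correct, and in fact it supplies more than the paper does: the paper offers no proof of this theorem at all, dismissing it with ``by straightforward calculation'' (and its phrasing ``we find a special class of hyper-matrices satisfying Equation (7)'' really only gestures at the ``if'' direction, whereas the statement is an ``if and only if''). Your orbit decomposition is the right way to make that calculation rigorous: the fixed-point equation $W_{hide}=\diagperm\circ\rot\,W_{hide}$ becomes the entrywise recurrence $W_{i+1,j+1}=\rot W_{ij}$ (indices mod $4$), the diagonal shift partitions the $16$ positions into the four orbits $\{(i,j): j-i\equiv c \pmod 4\}$ with the top row as representatives, propagation along each orbit forces every entry to be $\rot^{i-1}$ applied to its representative, and the closure condition $\rot^4=\mathrm{id}$ guarantees $A,B,C,D$ are genuinely free --- which is exactly the ``only if'' content the paper never justifies. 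I checked all four orbits against the displayed matrix (e.g.\ $(1,2)\to(2,3)\to(3,4)\to(4,1)$ carrying $B,\rot B,\rot^2B,\rot^3B$) and they match. The one point worth flagging is that the paper's definition ``$\diagperm(W_{ij})=W_{i+1,j+1}$'' is ambiguous about whether the entry at $(i,j)$ is moved to $(i+1,j+1)$ or replaced by the entry from $(i+1,j+1)$; the opposite reading would give the recurrence $W_{i+1,j+1}=\rot^{3}W_{ij}$ and a different (transposed-looking) canonical form, so you should state explicitly that you adopt the convention under which the displayed matrix is indeed a solution --- with that convention fixed, your proof is complete and correct.
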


Thus, we restrict the hyper-matrix of the filters in isotonic layers to be the form in previous theorem. Therefore, an isotonic layers has the desired property, i.e., Equation (\ref{eqn::layer2}). This can be easily extended to general cases.

\subsection{Decycle layers}
After a stacking of isotonic layers, we have learnt a very deep representation, with the order of cyclic permutations. Thus, the task of decycle layers is to transform cyclic permutation information into rotation equivariant representations. Formally, we require the following property,
\begin{equation}
\label{eqn::layer3}
f_{out}(\rot\circ\perm x)=\rot f_{out}(x),
\end{equation}
where $f_{out}$ is the convolution of a decycle layer. The meaning is obvious. The cyclic permutation operator is removed, reducing to rotation equivariance. Similarly, we can derive an equivalent condition for $W_{out}$, the weight of decycle layer. This requires two fact that, if $W_{out}$ is a hyper-vector, one has $W_{out}*\perm x=\perm ^{-1}W_{out}*x$ and $\rot\circ\perm=\perm\circ\rot$. Thus, the derivation is shown below,
\begin{align*}
                f_{out}(\rot\circ\perm x) &= \rot f_{out}(x)\\
\Leftrightarrow \quad   W_{out}*(\rot\circ\perm x)  &= \rot(W_{out}*x)\\
\Leftrightarrow \quad   W_{out}*(\perm\circ\rot x)  &= \rot W_{out}*\rot x\\
\Leftrightarrow \quad~~   \perm ^{-1}W_{out}*\rot x &= \rot W_{out}*\rot x.
\end{align*}
Since $x$ can be an arbitrary image, one has 
\begin{equation}
W_{out}=\perm\circ\rot W_{out}.
\end{equation} 
This is equivalent to say that $W$ is of the form $[w, \rot w, \rot ^2w, \rot ^3w]$. Although it looks similar to that one in cycle layers, one see that they actually differ by a transpose operation. 

Thus, a decycle layer applies filters of this form, decoding permutation information into our desired rotation equivariance, i.e., $f_{out}(\rot\circ\perm x)=\rot f_{out}(x)$.


\subsection{Architecture of DREN}
\label{sec::framework}
In this subsection, we make a careful discussion on the architecture and rotation equivariance of DREN, in detail.

\subsubsection{Framework} In the architecture of DREN, the first layer must be a cycle layer. It is followed by $k\geq0$ consecutive isotonic layers and one more decycle layer. Then, we claim that the output representation of the decycle layer is rotation equivariant, which can be easily proved by using the Equation (\ref{eqn::layer1}), (\ref{eqn::layer2}) and (\ref{eqn::layer3}) repeatly. For notational simplicity, we reuse $f_{hide}$ for different isotonic layers. Indeed, one has,
\begin{equation}
\begin{split}
f_{out}f^k_{hide}f_{in}(\rot x)&=f_{out}f^k_{hide}(\rot\circ\perm f_{in}(x))\\
&=f_{out}(\rot\circ\perm f^k_{hide}f_{in}(x))\\
&=\rot f_{out}f^k_{hide}f_{in}(x),
\end{split}
\end{equation}
where if we view $f_{out}f^k_{hide}f_{in}$ the representation $f$, the equality of two sides is exactly our desired property, i.e., Equation (\ref{eqn::goal}). Note that, at the end of Section \ref{sec::Preliminaries}, we have assumed that $n_i=1$ for $0\leq i\leq k+1$. In fact, this property can be proved for the general case similarly.

\subsubsection{Compatibility} Moreover, adding bias term is allowable, if any channel in a group share the shared bias value. So do batch normalization layers with the shared scale and bias parameters. Since ReLU activation functions are channel indenpendent operation, they do not have any impact on rotation equivariance, thus compatible with our architecture. So do dropout layers. Moreover, one can also apply fully connected layers following the decycle layer.

In addition, we should discuss convolution and pooling layers with stride $>1$. When stride is more than $1$, one can observe that the filter may not convolve marginal pixels in an image, thus destroying rotation equivariance. Indeed, suppose the size of the image is $n$ by $n$, rotation equivariance can be preserved iff. $n=k\cdot stride+kernel_size$ for some $k\in\mathbb{N}$. The proof is trivial.

\subsubsection{Invariance} Furthermore, the equivariance property can be lifted to invariant property in our framework. Simply applying a global pooling layer\cite{nin} in the end of our framework can make output rotation invariant.


\subsection{Memory consumption analysis}
Some previous work\cite{dieleman2015rotation,dieleman2016exploiting,laptev2016ti} rotate feature maps to obtain rotation equivariance. Indeed, one can show that rotating feature maps correspondingly can yield equivalent results. However, rotating and buffering filters are more efficient, because feature maps have much larger size than filters at most of the time.

The theoretical analysis of memory usage is based on the fact that most of popular deep learning frameworks use GEMM algorithm\cite{gemm} to compute image convolution. Precisely, feature maps should be copied $k^2$ times, merged into a single matrix, then multiplied by weight matrix. Therefore, the total size of feature maps is clearly much larger than the size of filters at most of the time. This indicates that rotating the feature maps lead to about 4 times more memory cost than rotating the filters. The detail memory costs are listed in Table \ref{tbl::memory}. 

On the other side, rotating feature maps also costs more time in rotation operation and copy operation. In Section \ref{sec::exp}, we give a detailed comparison on our approach and an architecture using the strategy of rotating feature maps, proposed by \cite{dieleman2016exploiting}.

\begin{table}
\centering

\begin{threeparttable}
\caption{Memory cost of rotating filters and feature maps}
\label{tbl::memory}
\begin{tabular}{|L{3.5cm}|C{1.5cm}C{2.3cm}|}
\cline{1-3}
\diagbox{Mem. cost}{Method} & Rotate filters  & Rotate feature map \\
\cline{1-3}
Filters          & $4c_{in}c_{out}k^2$  & $c_{in}c_{out}k^2$ \\
Feature map      & $nc_{in}wh$          & $4nc_{in}wh$       \\
Feature map(GPU) & $nc_{in}whk^2$       & $4nc_{in}whk^2$    \\
\cline{1-3}
\end{tabular}
\begin{tablenotes}
\footnotesize
\item [1] Here $n,c_{in},c_{out},w,h,k$ denote batch size, input channels, output channels, width of input, height of input and kernel size, respectively.
\end{tablenotes}
\end{threeparttable}
\end{table}

\subsection{Relations to other methods}
\label{sec::relation}
Since DREN is a general framework, it can be shown that several existing networks become special cases of our framework.

For instance, if we set all element of the hyper-matrix of a decycle layer to be the identity matrix times a normalizing factor, the decycle layer becomes cross-channel mean pooling layer that is used by most of work\cite{group, wu2015flip, marcos2016learning}. 

For isotonic layers, we can set $W_{hide}$ to a diagonal form, i.e., all of filters other than those on the diagonal of hyper-matrix to be zero matrices. In other words, this means that all of feature maps would be convolved by rotated copies of a single filter, which is utilized by \cite{fasel2006rotation, dieleman2016exploiting,laptev2016ti}. 

Moreover, the last decycle layer can be replaced by a cross-channel max pooling layer. Although it is not a decycle layer in nature, the cross-channel max pooling layer satisfies Equation (\ref{eqn::layer3}) as well. Actually other layers that satisfy Equation (\ref{eqn::layer3}), such as cross-channel root-mean-square(RMS) layer, can also be applied to replace the decycle layer and to preserve rotation equivariance as well.


Finally, we claim that our approach yields the equivalent result to that of \cite{dieleman2016exploiting} when the proposed layers are carefully used. Note that, by convolutional distributive law, for any filter $W$ and any feature map $x$, $\rot(W*x)=\rot(W)*\rot(x)$. 
Then we can derive that
the cyclic slice layer\cite{dieleman2016exploiting} followed by an ordinary convolutional layer equals to the cycle layer;
the cyclic roll layer\cite{dieleman2016exploiting} followed by an ordinary convolutional layer equals to the isotonic layer; 
the cyclic pooling\cite{dieleman2016exploiting} is actually cross-channel pooling which is a special case of the decycle layer;
the cyclic stack layer\cite{dieleman2016exploiting} followed by an ordinary convolutional layer equals to the isotonic layer whose weight hyper-matrix $W_{hide}$ is a diagonal matrix. In fact, DREN operates on filters rather than feature maps, reducing about half of running time with less memory overhead compared with that of \cite{dieleman2016exploiting}

\section{Experiments}
\label{sec::exp}
In this section, we evaluate our framework on two datasets, Rotated MNIST and CIFAR-10. All of the experiments are published on GitHub\footnote{\url{https://github.com/microljy/DREN_Tensorflow}}.

\subsection{Rotated MNIST}
First, we evaluate on Rotated MNIST dataset\cite{larochelle2007}. It is the rotated version of MNIST dataset\cite{mnist}, with digits being rotated for a degree uniformly drawn from $[0,2\pi]$. In total, it contains $62000$ handwritting digit images, among which there are $10000$ images for training, $2000$ for validating and $50000$ for testing. 

We choose the Z2CNN(refer to \cite{group}) as our baseline, which consists of $7$ convolutional layers of kernel size $3\times3$($4\times4$ in the last layer). Each layer of Z2CNN has $20$ channels, followed by a ReLU activation layer, a dropout layer, a batch normalization layer. Besides, a max pooling layer is applied right after the second layer.

We replace the first layer and the last layer of the baseline network by a cycle layer and a decycle layer, respectively. Besides, the intermediate layers are replaced by isotonic layers. Moreover, since that our proposed layers can reduce the number of parameters fourfold, we double the number of channels in each layers to keep the number of parameters approximately fixed.

Table \ref{tbl::rot-mnist} lists our results. This architecture(DREN) is found to achieve high accuracy with a small number of params. We also try to replace the decycle layer in DREN with a isotonic layer followed by a cross-channel max pooling layer. This network(DRENMaxPoolling) have 25k params., which is 24\% less than that of H-Net\cite{worrall2016harmonic}, and equal to that of P4CNN\cite{group} and Dieleman et al.\cite{dieleman2016exploiting}. DRENMaxPoolling outperforms all the previous models without more params.. In addition, our architecture can reduce about half of running time with less memory overhead compared with Dieleman et al.\cite{dieleman2016exploiting}, this would be further discussed in Section \ref{sec::time}. 





\begin{table}[!ht]
\caption{Performance of various models on Rotated MNIST.}
\label{tbl::rot-mnist}
\begin{center}
\begin{tabular}{|l|c|c|}
\hline
Model & Error & Param. \\
\hline
SVM\cite{larochelle2007}                     & 11.11\%           & - \\
Transformation RBM\cite{Sohn2012Learning}    & 4.2\%             & - \\
Conv-RBM\cite{schmidt2012learning}           & 3.98\%            & - \\ 
\hline
Z2CNN\cite{group}                            & 5.03\%            & 22k\\
P4CNN\cite{group}                            & 2.28\%            & 25k\\
H-Net\cite{worrall2016harmonic}              & 1.69\%            & 33k\\
Dieleman et al.\cite{dieleman2016exploiting} & 1.78\%            & 25k\\
DREN                                         & 1.78\%            & 22k\\
\textbf{DRENMaxPoolling}                    & \textbf{1.56\%}   & 25k\\
\hline
\end{tabular}
\end{center}
\end{table}

\subsection{CIFAR-10}
We also evaluate our framework on CIFAR-10\cite{krizhevsky2009learning}, a real-world dataset that does not have rotational symmetry in nature. It consists of $50000$ training and $10000$ testing images uniformly drawn from $10$ classes. Each one has RGB channels of size $32\times32$. For this dataset, we apply global contrast normalization to preprocess the data, as was used by \cite{goodfellow2013maxout}.

We test our framework on Network in Network(NIN)\cite{nin} and Resnet-20\cite{he2016deep}. First, we replace the ordinary convolutional layers of these models by our proposed layers, see r-NIN(conv1-4), r-Resnet-20(conv1-13), where conv1-k means that the first k layers are replaced by our proposed layers. Since our proposed layers only need a quarter of the number of parameters in each layer, we also evaluate r-NIN(conv1-4)$\times4$ and r-Resnet-20(conv1-13)$\times4$, where the number of channels of the first 4 layers are doubled to keep the total number of parameters unchanged. 
The results are shown in Table \ref{tbl::rot-cifar}. It shows that our model(r-NIN(conv1-4)$\times4$ and r-Resnet-20(conv1-13)$\times4$) can outperform the baseline with roughly the same number of parameters. In this experiment, we do not replace all of the convolution layers with our proposed layers, since we observe that simply replacing all layers cannot achieve the best performance. This would be discussed in the next subsection. 

\begin{table}[t]
\centering
\begin{threeparttable}
\caption{Performance of various models on CIFAR-10.}
\label{tbl::rot-cifar}
\begin{tabular}{|C{3.8cm}|C{1.5cm}|C{1.5cm}|}
\hline
Model                   & Error          & Params. \\
\hline
NIN\cite{nin}           & 10.41\%        & 967k\\
NIN*                    & 9.4\%          & 967k\\
r-NIN(conv1-4)          & 9.8\%          & \textbf{576k}\\
\textbf{r-NIN(conv1-4)$\times4$}& \textbf{9.0\%} & 958k\\
\hline
Resnet-20\cite{he2016deep}& 9.00\%          & 297k\\
r-Resnet-20(conv1-13)     & 8.51\%          & \textbf{245k}\\
\textbf{r-Resnet-20(conv1-13)$\times4$}     & \textbf{7.17}\% & 297k\\
\hline
\end{tabular}
\begin{tablenotes}
\footnotesize
\item [1] * means our implementation.
\item [2] conv1-k means 1-k layers are replaced by our proposed layers
\item [3] $r-$ means the model utilizes rotation equivariant layers.
\item [4] $\times4$ means the number of channels of proposed layers are doubled.
\end{tablenotes}
\end{threeparttable}
\end{table}

\pgfplotsset{compat=1.6,tick style={font=\small}}

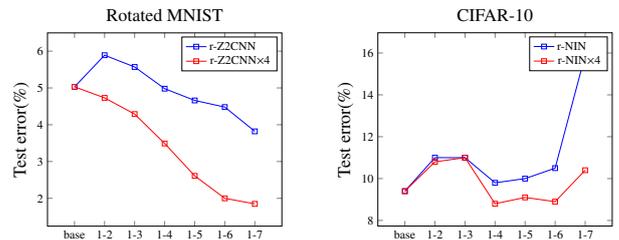
\begin{figure}
\label{fig::layers1}
\centering
\begin{minipage}{0.23\textwidth}
\centering
\scalebox{0.45}{
\begin{tikzpicture}
\begin{axis}[
    title=\scalebox{1.5}{Rotated MNIST},
    enlargelimits=0.15,
    legend style={anchor=north east,legend columns=1},
    ylabel={\scalebox{1.5}{Test error(\%)}},
    symbolic x coords={base, 1-2, 1-3, 1-4, 1-5, 1-6, 1-7},
    xtick=data,
    x tick label style={rotate=0},
    nodes near coords align={vertical},
    ]
\addplot[draw=blue,thick,mark=square,thick] 
    coordinates {(base,5.03) (1-2,5.89) (1-3,5.57)(1-4,4.98) (1-5,4.66)(1-6,4.48) (1-7,3.82) };
\addplot[draw=red,thick,mark=square,thick] 
    coordinates {(base,5.03) (1-2,4.73) (1-3,4.29)(1-4,3.49) (1-5,2.61)(1-6,2.00) (1-7,1.85) };
    \legend{r-Z2CNN$\quad~$,r-Z2CNN$\times4$}
\end{axis}
\end{tikzpicture}
}
\end{minipage}
\begin{minipage}{0.23\textwidth}
\centering
\scalebox{0.45}{
\begin{tikzpicture}
\begin{axis}[
    title=\scalebox{1.5}{CIFAR-10},
    enlargelimits=0.15,
    legend style={anchor=north east,legend columns=1},
    ylabel={\scalebox{1.5}{Test error(\%)}},
    symbolic x coords={base, 1-2, 1-3, 1-4, 1-5, 1-6, 1-7},
    xtick=data,
    x tick label style={rotate=0},
    nodes near coords align={vertical},
    ]
\addplot[draw=blue,thick,mark=square,thick] 
    coordinates {(base,9.4) (1-2,11.0) (1-3,11.0)(1-4,9.8) (1-5,10.0)(1-6,10.5) (1-7,15.9) };
\addplot[draw=red,thick,mark=square,thick] 
    coordinates {(base,9.4) (1-2,10.8) (1-3,11.0)(1-4,8.8) (1-5,9.1)(1-6,8.9) (1-7,10.4) };
    \legend{r-NIN$\quad~$,r-NIN$\times4$}
\end{axis}
\end{tikzpicture}
}
\end{minipage}
\caption{\textbf{Classification error of models with different numbers of our proposed layers}: The left one is the results on Rotated MNIST and the right one is that on CIFAR-10. The horizontal axis shows the number of our proposed layers applied. It turns out that, for datasets which has rotation symmetry in nature(Rotated MNIST), applying more isotonic layers yields better classification accuracy. We argue that higher level rotation equivariant representations learnt by stacking more isotonic layers improves the classification accuracy on such dataset. However, for CIFAR-10, the circumstance changes. Applying $4$ or $5$ isotonic layers achieves the best performance. We argue that higher level rotation equivariant representations are not helpful on such dataset without rotation symmetry, and applying more isotonic layers reduces the number of parameters(model complexity) fourfold, therefore leading to severe underfitting. $\times4$ means the model with doubled number of channels.}
\end{figure}

\begin{figure*}
\label{fig::visual1}
\begin{center}
\includegraphics[width=\textwidth]{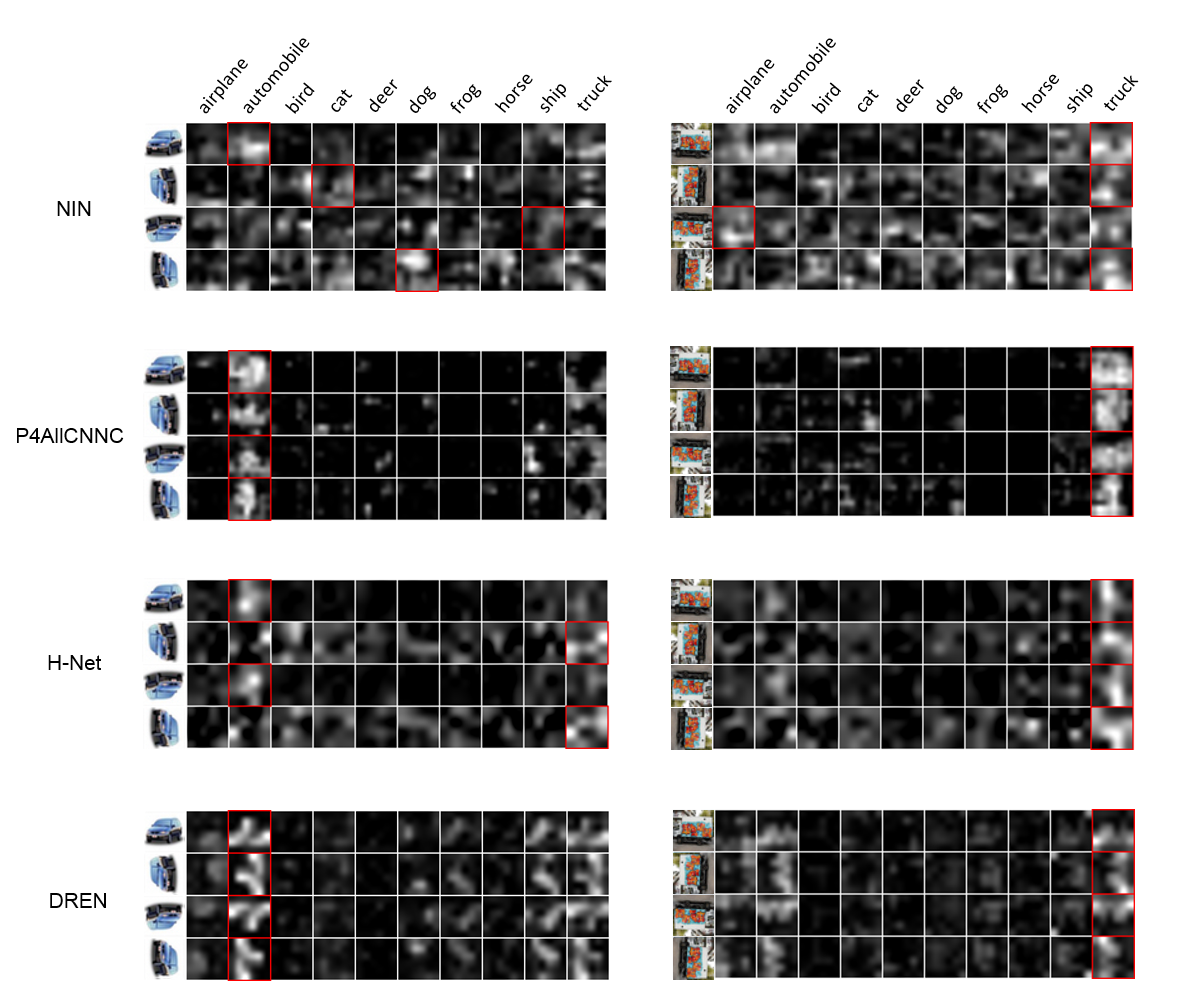}
\end{center}
\caption{\textbf{The representations in the last convolutional layer learnt by NIN\cite{nin}, H-Net\cite{worrall2016harmonic}, P4-ALL-CNN-C\cite{group} and DREN}. The two columns correspond to the representations of an image of automobile(left) and an image of truck(right). Inside one of the 8 pictures, the leftmost column contains the rotated input images while the remaining 10 columns contain the 10 feature maps in the last convolutional layers, which corresponds to the 10 classes of CIFAR-10. We slightly modify the architecture of H-Net to ensure that a global average pooling layer is used right after the last convolutional layer. The other three networks also apply this structure. Thus, the predictions of the networks are exactly the class whose related representation attains maximum density. We mark the maximal one(the prediction) with a red square. It turns out that when the input is rotated, NIN yields entirely different representations, leading to wrong predictions. For H-Net and P4-ALL-CNN-C, the representations are partly equivariant to rotation. However, H-Net still predicts incorrectly, when the image of automobile is rotated. In fact, only the representations of DREN are exactly equivariant to rotation, and thus the predictions of DREN are consistent, when the input is rotated. Moreover, we find our leanrt representations for automobile and for truck are very similar while that of other models are not. This shows that the representations learnt by DREN are similar for semantically similar objects, which indicates the reasonability of adding rotation equivariant constraint.}
\end{figure*}

\subsection{How many isotonic layers should be applied?}
The only hyperparameter of our framework is the number of isotonic layers used. Thus, in this subsection, we study the performance of DRENs with different numbers of isotonic layers in the following experiments. We also choose two baselines above, evaluate them on Rotated MNIST and CIFAR-10 respectively, and gradually integrate more isotonic layers into baselines(e.~g., conv1-2, conv1-3, conv1-4, etc.).

The results are shown in Figure 3. We observe two quite distinct trends in the two datasets. For Rotated MNIST which has rotation symmetry in nature, the more isotonic layers we use, the better performance the model achieves, even with less and less number of parameters. However, for CIFAR-10, r-NINs achieve best performance when using two isotonic layers(i.e., replace 4 convolutional layers). 

Because CIFAR-10 does not have deep rotation symmetry, intuitively, most effective latent representations maybe be not rotation equivariant. Thus, using too much isotonic layers prevents the network from learning such representations. This is one of the reason why our framework cannot achieve better performance with more isotonic layers. Another cause may be that the number of parameters(i.e., model complexity) decreases with more isotonic layers used, thus leading to a underfitting model.

The other extreme case that very a few isotonic layers are applied, also results in bad results. According to Figure 3, our models(r-Z2CNN,r-NIN,r-NIN$\times4$) underperform the baseline when less than two isotonic layers are applied. 
This observation actually meets with the argument we pose in Section \ref{sec::intro}, that the networks with a few or none isotonic layer can only learn shallow and trivial rotation equivariant representations.

\subsection{Running time of rotating filters and feature maps}
\label{sec::time}
In order to compare the running time costs between rotating feature maps and rotating filters, we evaluate both methods on Rotated MNIST dataset. We take the implementation presented by \cite{dieleman2016exploiting} as the representative of the methods that rotate feature maps. Since there are some difference between their framework and ours, we make minor modification on both frameworks to ensure that the comparison is a fair play. The decycle layer is replaced with a cross-channel mean-pooling layer in our framework, and the cyclic roll operation is applied before every convolution layer except the first one in their frameworks.

The running time of rotating filters and feature maps are shown in Table \ref{tbl::compare}. Both implementations run on Intel core i7-5930K processors(3.50GHz) and GeForce GTX 1080 with NVIDIA CUDA 8.0 and cuDNN 5. Since all models are implemented with Theano that automatically free memory using a reference count system, we cannot exactly determine memory consumption. Thus, only running time is listed. 

\begin{table}[!ht]
\caption{Running time of two implementations.} 
\label{tbl::compare}
\centering
\begin{threeparttable}
\begin{tabular}{|c|c|c|c|}

\hline
\multirow{2}{*}{Implementation} &
\multicolumn{2}{|c|}{Setting} &
\multirow{2}{*}{Time} \\
\cline{2-3}
 & Model & Batch size   &  \\
\hline
\multirow{4}{*}{Rotate filters}      & \multirow{2}{*}{Z2CNN} & 64  & 1.97s\\
                                     &                        & 128 & 1.44s\\
\cline{2-4}
                                     & \multirow{2}{*}{NIN}   & 64  & 11.00s\\
                                     &                        & 128 & 9.52s\\
\hline
\multirow{4}{*}{Rotate feature maps\cite{dieleman2016exploiting}} & \multirow{2}{*}{Z2CNN} & 64  & 4.15s\\
                                     &                        & 128 & 3.74s\\
\cline{2-4}
                                     & \multirow{2}{*}{NIN}   & 64  & 22.13s\\
                                     &                        & 128 & 18.73s\\
\hline
\end{tabular}
\begin{tablenotes}
\item [1] The time cost is the testing time of testing 50k images.
\end{tablenotes}
\end{threeparttable}
\vskip -0.1in
\end{table}

Table \ref{tbl::compare} shows that our implementation based on rotating filters are twice as fast as the implementation based on rotating feature maps\cite{dieleman2015rotation,laptev2016ti,dieleman2016exploiting}. In fact, the running time of the implementation based on rotating filters can be the same as the running time of the ordinary CNN, since the filters can be rotated and stored before running.

\subsection{Visualization of DREN}
Since deep rotation equivariant representations can be learnt from our framework, we are concerned with how such representations differ from ordinary representation of a typical CNN and other rotation equivariance structures. To this end, we visualized the representations produced by NIN\cite{nin}, P4-ALL-CNN-C\cite{group}, H-Net\cite{worrall2016harmonic} and DREN trained on CIFAR-10. We slight modify the architecture of H-Net to a global average pooling manner that are used by the other three models in order for fair comparison and clear visualization. In detail, we apply global average pooling layer rather than fully connected layer in above models. This indicates that the feature maps in the last convolutional layers are the confidence maps of the categories in fact. Such feature maps of four models are shown in Figure 4. 
In each picture, there are 11 columns. The leftmost one is the input image while the remaining ones are the feature maps in the last convolutional layer, each one corresponding to a class of CIFAR-10. In each row, we mark the predictions of the networks(i.e., the one with maximal density) with red squares.

It turns out that when the input is rotated, NIN yields entirely different representations, leading to wrong predictions. For H-Net and P4-ALL-CNN-C, the representations are partly equivariant to rotation. However, H-Net still predicts incorrectly, when the image of automobile is rotated. In fact, only the representations of DREN are exactly equivariant to rotation, and thus the predictions of DREN are consistent when the input is rotated. 

Moreover, we find our leanrt representations for automobile and for truck are very similar while that of other models are not. This shows that the representations learnt by DREN are similar for semantically similar objects, which indicates the reasonability of adding rotation equivariant constraint.

\section{Conclusion \& Discussion}
We propose a novel CNN framework, DREN, which can learn deep rotation equivariant representations from images. Theoretical guarantee is provided that the features are rotation equivariant to rotation of degree $k\pi/2$ for $k\in\mathbb{Z}$ and experiment results show that our framework can also deal with other degrees. We evaluate our framework on Rotated MNIST and CIFAR-10 datasets, outperforming several state-of-the-art CNN architectures. In addition, our implementation of rotating filters costs only a half time of feeding forward comparing to the methods which rotate feature maps. 

As a future direction, we suggest to settle the cases of rotation of general degree. Although our framework mainly deals with rotation of degree $k\pi/2$ for $k\in\mathbb{Z}$, it can be easily generalized to any finte group of rotation transformations, if there is an efficient way to perform rotations of any degree on filters. In short, our approach is a tradeoff between efficiency and performance.

For another direction, we would like to design new types of convolutional layers to implement equivariance with respect to scale transformations and more generally, affine transformations. These may improve the performance of CNNs further.

\section*{References}
\bibliography{reference}

\end{document}